\documentclass{article}
\usepackage[final,activate={true,nocompatibility}]{microtype}
\usepackage{graphicx}
\usepackage{subfigure}
\usepackage[USenglish]{babel}

\usepackage{amsmath}
\usepackage{amssymb}
\usepackage{amsthm}
\theoremstyle{definition}
\newtheorem{definition}{Definition}[section]
\usepackage{thmtools}
\usepackage{thm-restate}

\usepackage{bbm}
\usepackage{optidef}
\usepackage{mathtools}
\usepackage{pifont}
\usepackage{nicefrac}

\usepackage{array}
\usepackage{adjustbox}
\usepackage{multirow}
\usepackage{booktabs}
\usepackage{balance}

\usepackage[firstpage]{draftwatermark}
\SetWatermarkText{
 \hspace*{3.5in}
 \raisebox{10in}{
  \includegraphics[height=0.75in]{}\hspace{-30pt}
  \includegraphics[height=0.75in]{}\hspace{-30pt}
  \includegraphics[height=0.75in]{}
 }
}
\SetWatermarkAngle{0}

\usepackage{hyperref}

\usepackage[accepted]{mlsys2020}

\mlsystitlerunning{Checkmate: Breaking the Memory Wall with Optimal Tensor Rematerialization}
\newcommand{\OURS}{Checkmate}

\usepackage{xcolor}

\newcommand{\GC}[2]{\text{mem\_freed}_{#2}(#1)}
\newcommand{\NumHazards}[1]{\text{num\_hazards}(#1)}
\newcommand{\Deps}[1]{\textsc{Deps}[#1]}
\newcommand{\Users}[1]{\textsc{Users}[#1]}

\newcommand{\Collectable}[1]{\textsc{Free}_{#1}}
\newcommand{\Min}{M_\text{input}}
\newcommand{\Mparam}{M_\text{param}}
\newcommand{\Mbudget}{M_\text{budget}}

\newcommand{\com}[1]{s_{#1}}
\newcommand{\Regs}[1]{\textsc{Regs}[#1]}
\newcommand{\eg}{\textit{e.g.}}
\newcommand{\ie}{\textit{i.e.}}

\begin{document}
\twocolumn[
	\mlsystitle{\OURS{}: Breaking the Memory Wall\\ with Optimal Tensor Rematerialization}
	\mlsyssetsymbol{equal}{*}
	\begin{mlsysauthorlist}
		\mlsysauthor{Paras Jain}{equal,cal}
		\mlsysauthor{Ajay Jain}{equal,cal}
		\mlsysauthor{Aniruddha Nrusimha}{cal}\\
		\mlsysauthor{Amir Gholami}{cal}
		\mlsysauthor{Pieter Abbeel}{cal}
		\mlsysauthor{Kurt Keutzer}{cal}
		\mlsysauthor{Ion Stoica}{cal}
		\mlsysauthor{Joseph E. Gonzalez}{cal}
	\end{mlsysauthorlist}
	\mlsysaffiliation{cal}{Department of EECS, UC Berkeley}
	\mlsyscorrespondingauthor{Paras Jain}{parasj@berkeley.edu}
	\mlsyskeywords{Machine Learning, MLSys, Deep Learning, Compiler, Python, TensorFlow, Memory, GPU}
	\vskip 0.3in
	\begin{abstract}
We formalize the problem of trading-off DNN training time and memory requirements as the \textit{tensor rematerialization} optimization problem, a generalization of prior checkpointing strategies. We introduce \OURS{}, a system that solves for optimal rematerialization schedules in reasonable times (under an hour) using off-the-shelf MILP solvers or near-optimal schedules with an approximation algorithm, then uses these schedules to accelerate millions of training iterations. Our method scales to complex, realistic architectures and is hardware-aware through the use of accelerator-specific, profile-based cost models. In addition to reducing training cost, \OURS{} enables real-world networks to be trained with up to 5.1$\times$ larger input sizes. Checkmate is an open-source project, available at \url{https://github.com/parasj/checkmate}.
	\end{abstract}
]
\printAffiliationsAndNotice{\mlsysEqualContribution}

\section{Introduction}
\label{sec:introduction}
Deep learning training workloads demand large amounts of high bandwidth memory. Researchers are pushing the memory capacity limits of hardware accelerators such as GPUs by training neural networks on high-resolution images~\cite{dong_superres,kim_superres,tai_superres}, 3D point-clouds~\cite{chenMultiview3DObject2017, yangHDNETExploitingHD}, and long natural language sequences~\cite{vaswani_attention_2017,devlin_bert:_2018,child_generating_2019}.
In these applications, training memory usage is dominated by the intermediate activation tensors needed for backpropagation (Figure~\ref{fig:mem_breakdown}).

The limited availability of high bandwidth on-device memory creates a \textit{memory wall} that stifles exploration of novel architectures.
Across applications, authors of state-of-the-art models cite memory as a limiting factor in deep neural network (DNN) design \cite{krizhevsky_imagenet_2012,he_deep_2016, chen_deeplab:_2016,gomez_reversible_2017,pohlen2017FRRN,child_generating_2019,liu_roberta:_2019,dai_transformer-xl:_2019}.

As there is insufficient RAM to cache all activation tensors for backpropagation, some select tensors can be discarded during forward evaluation. When a discarded tensor is necessary as a dependency for gradient calculation, the tensor can be \textit{rematerialized}. As illustrated in Figure~\ref{fig:illustrated_example}, rematerializing values allows a large DNN to fit within memory at the expense of additional computation.

\begin{figure}[t]
	\centering
	\includegraphics[width=\columnwidth]{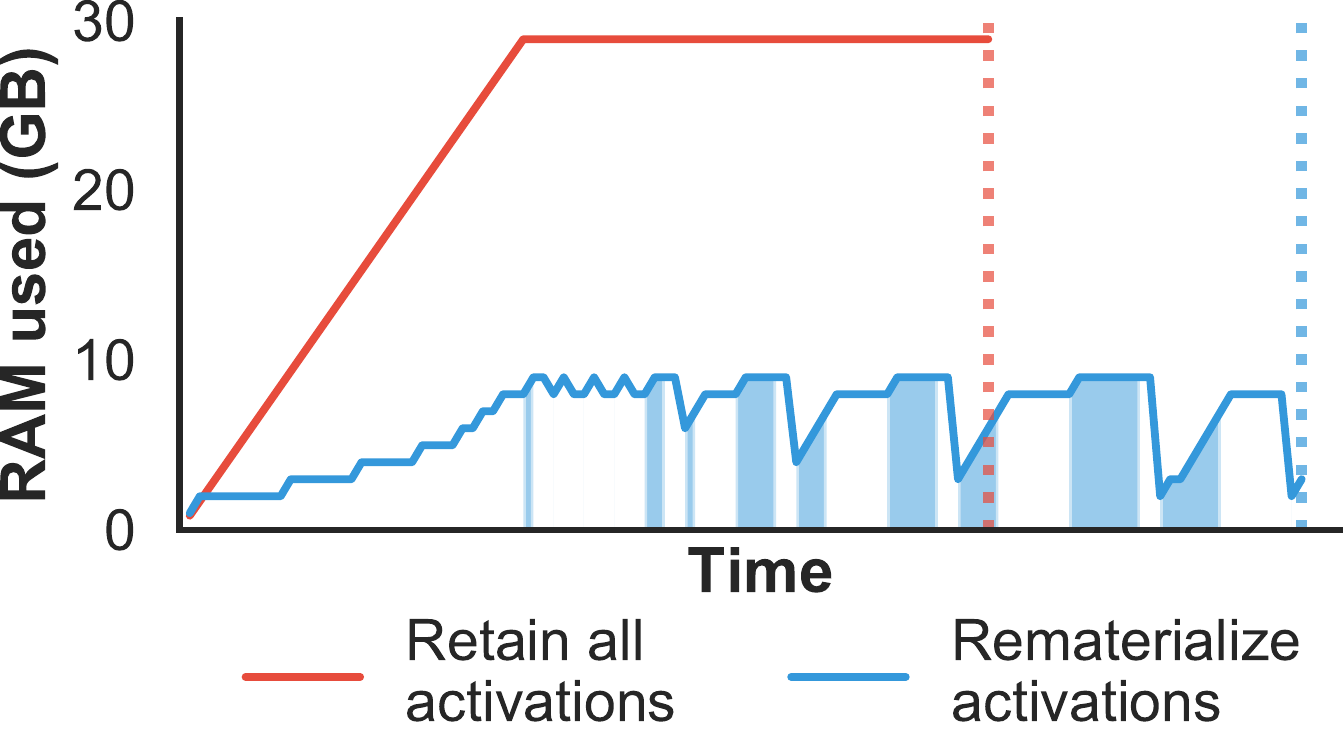}
	\vspace{-1em}
	\caption{This 32-layer deep neural network requires 30GB of memory during training in order to cache forward pass activations for the backward pass. Freeing certain activations early and rematerializing them later reduces memory requirements by 21GB at the cost of a modest runtime increase. Rematerialized layers are denoted as shaded blue regions. We present Checkmate, a system to rematerialize large neural networks \emph{optimally}. Checkmate is hardware-aware, memory-aware and supports arbitrary DAGs.}
	\label{fig:illustrated_example}
	\vspace{-1.5em}
\end{figure}

\citet{griewank_algorithm_2000} and \citet{chen_training_2016} present heuristics for rematerialization when the forward pass forms a linear graph, or path graph. They refer to the problem as checkpointing. 
However, their approaches cannot be applied generally to nonlinear DNN structures such as residual connections, and rely on the strong assumption that all nodes in the graph have the same cost. Prior work also assumes that gradients may never be rematerialized. These assumptions limit the efficiency and generality of prior approaches.

Our work formalizes tensor rematerialization as a constrained optimization problem. 
Using off-the-shelf numerical solvers, we are able to discover optimal rematerialization strategies for arbitrary deep neural networks in TensorFlow with non-uniform computation and memory costs.  
We demonstrate that optimal rematerialization allows larger batch sizes and substantially reduced memory usage with minimal computational overhead across a range of image classification and semantic segmentation architectures.
As a consequence, our approach allows researchers to easily explore larger models, at larger batch sizes, on more complex signals with minimal computation overhead. 

\begin{figure}[t]
	\centering
	\includegraphics[width=\linewidth]{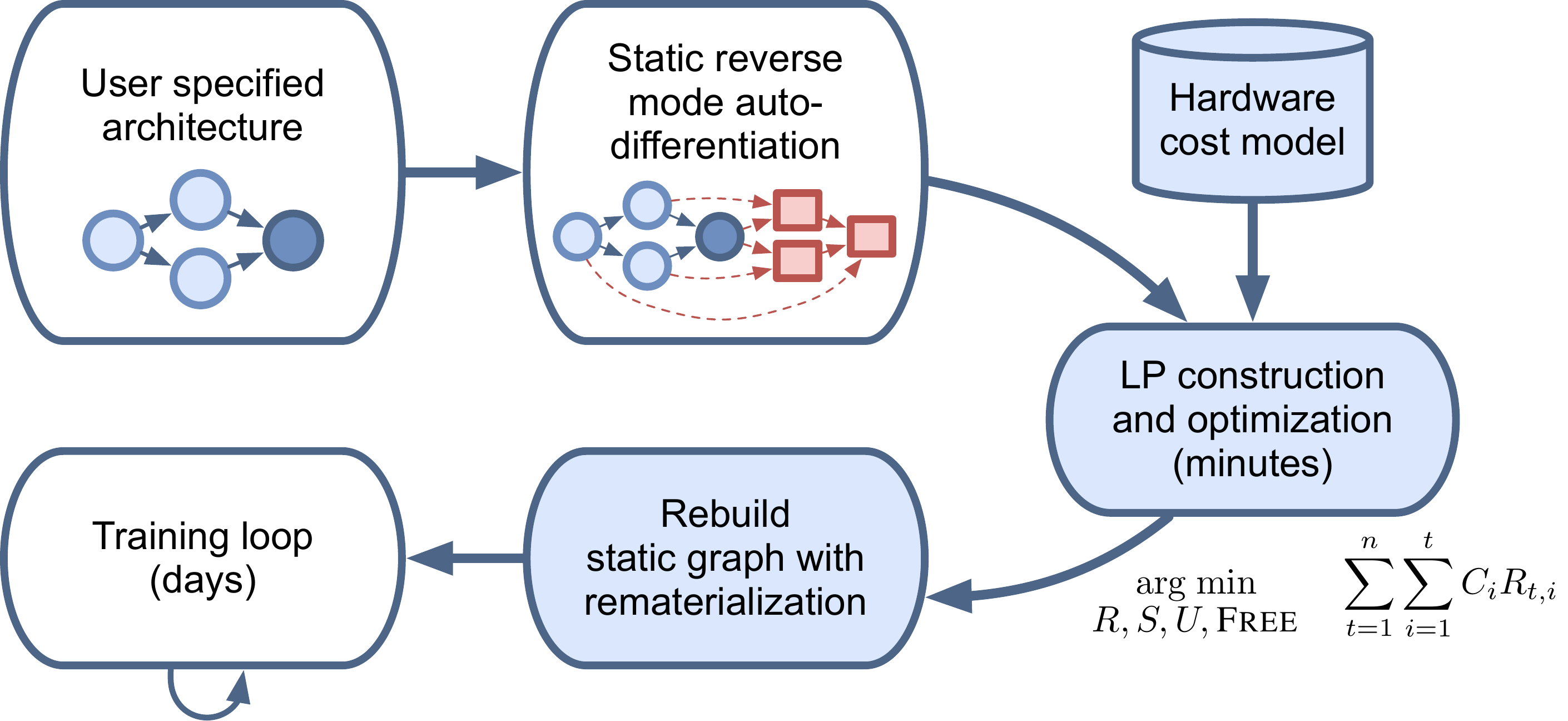}
	\vspace{-0.5em}
	\caption{Overview of the \OURS{} system.}
	\label{fig:system}
\end{figure}

In particular, the contributions of this work include: 
\begin{itemize}
	\itemsep0em
	\item a formalization of the rematerialization problem as a mixed integer linear program with a substantially more flexible search space than prior work, in Section~\ref{sec:ilp:complete_ilp}.
    \item a fast approximation algorithm based on two-phase deterministic LP rounding, in Section~\ref{sec:approx}.
	\item \OURS{}, a system implemented in TensorFlow that enables training models with up to 5.1$\times$ larger input sizes than prior art at minimal overhead.
\end{itemize}

\section{Motivation}
\label{sec:motivation}

While inference optimizations are well studied \cite{jain2019ooo}, training workloads have received less attention. Memory consumption during training consists of (a) intermediate features, or activations, whose size depends on input dimensions and (b) parameters and their gradients whose size depends on weight dimensions. Given that inputs are often several order of magnitude larger than kernels, most memory is used by features, demonstrated in Figure~\ref{fig:mem_breakdown}.

Frameworks such as TensorFlow \cite{abadi_tensorflow:_2016} and PyTorch \cite{paszke_automatic_2017, paszke_pytorch_2019} store all activations during the forward pass. 
Gradients are backpropagated from the loss node, and each activation is freed after its gradient has been calculated. In Figure~\ref{fig:illustrated_example}, we compare this memory intensive policy and a rematerialization strategy for a real neural network. Memory usage is significantly reduced by deallocating some activations in the forward pass and recomputing them in the backward pass.
Our goal is fit an arbitrary network within our memory budget while incurring the minimal additional runtime penalty from recomputation.

Most prior work assumes networks have linear graphs. For example, \citet{chen_training_2016} divides the computation into $\sqrt{n}$ segments, each with $\sqrt{n}$ nodes. Each segment endpoint is stored during the forward pass. During the backward pass, segments are recomputed in reverse order at $O(n)$ cost. 

Linear graph assumptions limit applicability of prior work.
For example, while the popular ResNet50~\cite{he_deep_2016} can be linearized by treating each residual block as a single node, this leads to inefficient solutions. 
For networks with longer skip connections, \eg{}, U-Net~\cite{ronneberger_u-net:_2015}, grouping nodes oversimplifies the graph.

Prior work also assumes all layers are equally expensive to recompute.
In the VGG19 architecture~\cite{simonyan_very_2014}, the largest layer is \emph{six orders of magnitude} more expensive than the smallest layer.

Our work makes few assumptions on neural network graphs. We explore a solution space that allows for (a) arbitrary graphs with several inputs and outputs for each node, (b) variable memory costs across layers and (c) variable computation costs for each layer (such as FLOPs or profiled runtimes). We constrain solutions to simply be \emph{correct} (a node's dependencies must be materialized before it can be evaluated) and \emph{within the RAM budget} (at any point during execution, resident tensors must fit into RAM).

\begin{figure}[t]
	\centering
	\includegraphics[width=\columnwidth]{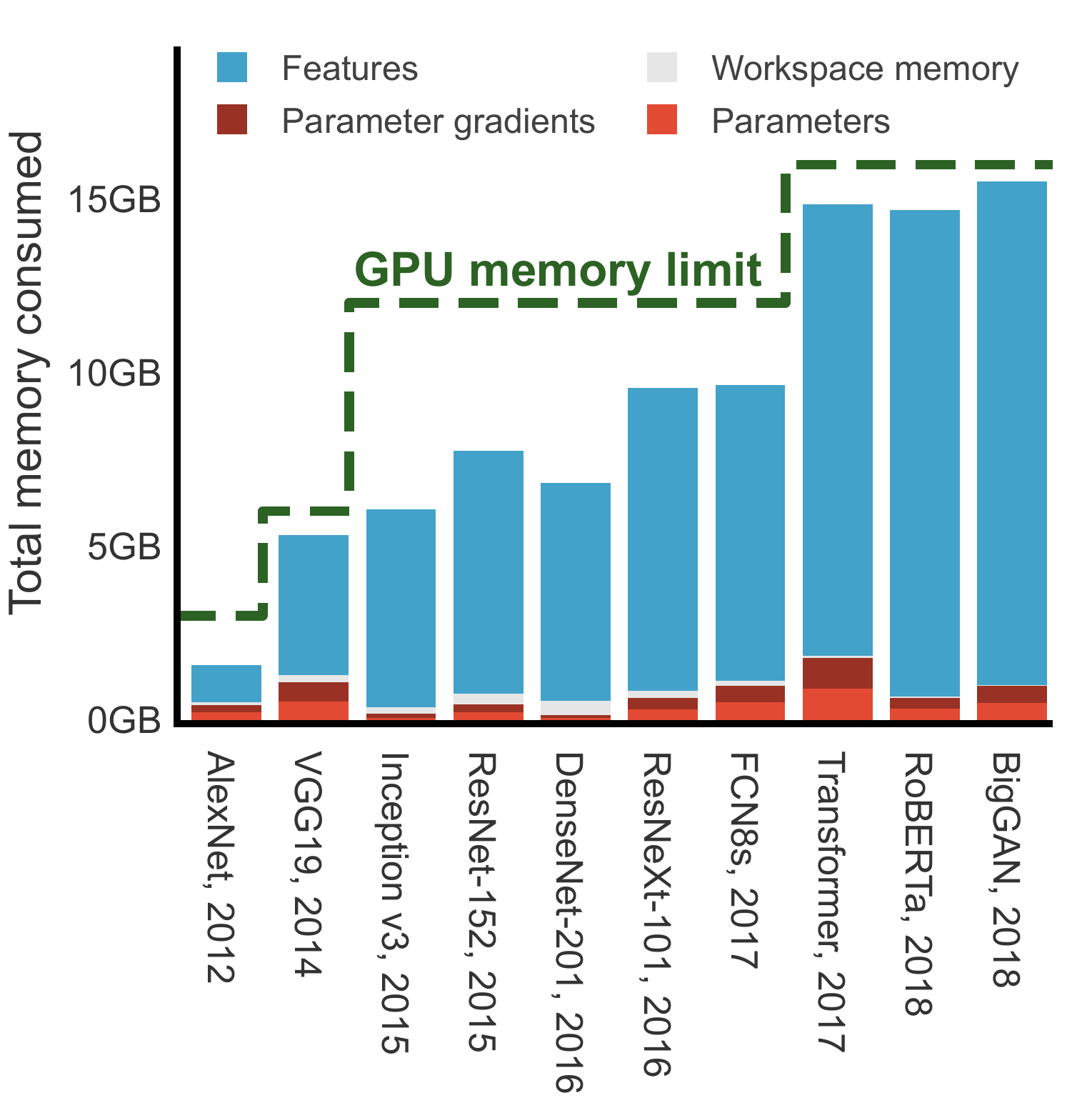}
	\vspace{-1.5em}
	\caption{Memory consumed by activations far outweigh parameters for popular model architectures. Moreover, advances in GPU DRAM capacity are quickly utilized by researchers; the dashed line notes the memory limit of the GPU used to train each model.\nocite{krizhevsky_imagenet_2012,simonyan_very_2014,szegedy_going_2015,he_deep_2016,huang_densely_2017,xie2017aggregated,long2015fully,vaswani_attention_2017,liu_roberta:_2019,brock2018large}}
	\label{fig:mem_breakdown}
\end{figure}

Subject to these constraints, we find solutions that minimize the amount of time it takes to perform a single training iteration.
We project schedules into space and time, allowing us to cast the objective as a linear expression.
This problem can then be solved using off-the-shelf mixed integer linear program solvers such as \citet{noauthor_glpk_nodate} or COIN-OR Branch-and-Cut \cite{forrest_coin-or_2019}. An optimal solution to the MILP will minimize the amount of additional compute cost within the memory budget.

\section{Related Work}
\label{sec:related}

\begin{table*}[tb]
	\footnotesize
	\begin{center}
		\newcolumntype{P}[1]{>{\centering\arraybackslash}p{#1}}
		\resizebox{\linewidth}{!}{\begin{tabular}{@{}m{3cm}m{8.5cm}P{1.4cm}P{1cm}P{1.4cm}@{}}
			\toprule
			\textsc{Method}                 & \textsc{Description}                                            & \textsc{General graphs} & \textsc{Cost aware} & \textsc{Memory aware} \\ \midrule
			Checkpoint all {\small (Ideal)} & No rematerialization. Default in deep learning frameworks.      & $\surd$                 & $\times$            & $\times$              \\
			Griewank et al. $\log{n}$       & \citet{griewank_algorithm_2000} \textsc{revolve} procedure      & $\times$                & $\times$            & $\times$              \\
			Chen et al. $\sqrt{n}$          & \citet{chen_training_2016} checkpointing heuristic              & $\times$                & $\times$            & $\times$              \\
			Chen et al. greedy              & \citet{chen_training_2016}, with search over parameter $b$      & $\times$                & $\times$            & $\sim$                \\ \midrule
			AP $\sqrt{n}$                   & Chen et al. $\sqrt{n}$ on articulation points + optimal R solve & $\sim$                  & $\times$            & $\times$              \\
			AP greedy                       & Chen et al. greedy on articulation points + optimal R solve     & $\sim$                  & $\times$            & $\sim$                \\
			Linearized $\sqrt{n}$           & Chen et al. $\sqrt{n}$ on topological sort + optimal R solve    & $\surd$                 & $\times$            & $\times$              \\
			Linearized greedy               & Chen et al. greedy on topological sort + optimal R solve        & $\surd$                 & $\times$            & $\sim$                \\ \midrule
			Checkmate ILP                    & Our ILP as formulated in Section~\ref{sec:ilp}                     & $\surd$                 & $\surd$             & $\surd$               \\
			Checkmate approx. & Our LP rounding approximation algorithm (Section~\ref{sec:approx}) & $\surd$ & $\surd$ & $\surd$ \\ \bottomrule
		\end{tabular}}
	\end{center}
	\caption{Rematerialization baselines and our extensions to make them applicable to non-linear architectures}
	\label{tab:baselines}
	\vskip -0.1in
\end{table*}

We categorize related work as checkpointing, reversible networks, distributed computation, and activation compression.

\textbf{Checkpointing and rematerialization}
~\citet{chen_training_2016} propose a heuristic for checkpointing idealized unit-cost linear $n$-layer graphs with $O(\sqrt{n})$ memory usage. \citet{griewank_algorithm_2000} checkpoint similar linear unit-cost graphs with $O(\log{n})$ memory usage and prove optimality for linear graphs with unit per-node cost and memory. In practice, DNN layers vary significantly in memory usage and computational cost \cite{sze2017efficient}, so these heuristics are not optimal in practice. 
\citet{chen_training_2016} also develop a greedy algorithm that checkpoints layers of a network in roughly memory equal segments, with a hyperparameter $b$ for the size of such segments. Still, neither procedure is cost-aware nor deallocates checkpoints when possible. \citet{gruslys_memory-efficient_2016} develop a dynamic programming algorithm for checkpoint selection in unrolled recurrent neural network training, exploiting their linear forward graphs. \citet{feng_cutting_2018} provide a dynamic program to select checkpoints that partition branching networks, but ignore layer costs and memory usage. \citet{doi:10.1080/10556788.2018.1459621} develop a divide-and-conquer strategy in programs. \citet{beaumont:hal-02131552} use dynamic programming for checkpoint selection in a specific architecture with joining sub-networks.

Intermediate value recomputation is also common in register allocation. Compiler backends lower an intermediate representation of code to an architecture-specific executable binary. During lowering, an abstract \textit{static single assignment} (SSA) graph of values and operations \cite{rosen_global_1988, cytron_efficiently_1991} is concretized by mapping values to a finite number of registers. If insufficient registers are available for an SSA form computation graph, values are \textit{spilled} to main memory by storing and later loading the value. Register allocation has been formulated as graph coloring problem \cite{chaitin_register_1981}, integer program \cite{goodwin_optimal_1996, lozano_combinatorial_2018}, and network flow~\cite{koes_global_2006}.

Register allocators may recompute constants and values with register-resident dependencies if the cost of doing so is less than the cost of a spill \cite{chaitin_register_1981, briggs_rematerialization_1992, punjani_register_2004}. While similar to our setup, register rematerialization is limited to exceptional values that can be recomputed in a single instruction with dependencies already in registers. For example, memory offset computations can be cheaply recomputed, and loads of constants can be statically resolved. In contrast, \OURS{} can recompute entire subgraphs of the program's data-flow.

During the evaluation of a single kernel, GPUs spill per-thread registers to a thread-local region of global memory (\ie{} local memory) \cite{micikevicius_local_2011, nvidia_nvidia_2017}. NN training executes DAGs of kernels and stores intermediate values in shared global memory. This produces a high range of value sizes, from 4 byte floats to gigabyte tensors, whereas CPU and GPU registers range from 1 to 64 bytes. Our problem of interkernel memory scheduling thus differs in scale from the classical problem of register allocation within a kernel or program. Rematerialization is more appropriate than copying values out of core as the cost of spilling values from global GPU memory to main memory (RAM) is substantial~\cite{micikevicius_local_2011, jain_gist:_2018}, though possible \cite{meng_training_2017}.

\textbf{Reversible Networks} ~\citet{gomez_reversible_2017} propose a reversible (approximately invertible) residual DNN architecture, where intermediate temporary values can be recomputed from values derived \textit{later} in the standard forward computation. Reversibility enables recomputation during the backward pass. \citet{bulo_-place_2018} replace only ReLU and batch normalization layers with invertible variants and reduce memory usage up to 50\%. We find rematerialization enables greater savings and a wider range of budgets, but reversibility is a promising complementary approach.

\textbf{Distributed computation} ~Orthogonal approaches to address the limited memory
problem are distributed-memory computations and gradient accumulation.
However, model parallelism requires access to additional expensive compute accelerators, fast networks, and non-trivial partitioning of model state to balance communication and computation~\cite{gholami2018integrated,jiaDataModelParallelism,mccandlishEmpiricalModelLargeBatch2018}.
Gradient accumulation enables larger batch sizes by computing the gradients with multiple sub-batches across a mini-batch. However, gradient accumulation can degrade performance as batch normalization performs poorly on small batch sizes~\cite{wuGroupNormalization2018,ioffe_batch_2015}.

\textbf{Activation compression} ~In some DNN applications, it is possible to process compressed representations with minimal accuracy loss. \citet{gueguen_faster_2018} classify discrete cosine transforms of JPEG images rather than raw images.
\citet{jain_gist:_2018} quantize activations, cutting memory usage in half.
Compression reduces memory usage by a constant factor, but reduces accuracy.
Our approach is mathematically equivalent to rematerialization-free training and incurs no accuracy penalty.

\section{Optimal Rematerialization}
\label{sec:ilp}

In this section, we develop an optimal solver that schedules computation and garbage collection during the evaluation of general data-flow graphs including those used in neural network training. Our proposed scheduler minimizes computation or execution time while guaranteeing that the schedule will not exceed device memory limitations. The rematerialization problem is formulated as a mixed integer linear program (MILP) that can be solved with standard commercial or open-source solvers.
 
\subsection{Problem definition}
\label{sec:ilp_problem}
A computation or data-flow graph ${G=(V,E)}$ is a directed acyclic graph with $n$ nodes ${V=\{v_1,\ldots,v_n\}}$ that represent operations yielding values (\eg{} tensors). 
Edges represent dependencies between operators, such as layer inputs in a neural network.
Nodes are numbered according to a topological order, such that operation $v_j$ may only depend on the results of operations $v_{i < j}$.

\pagebreak[2]
Each operator's output takes $M_v$ memory to store and costs $C_v$ to compute from its inputs.  We wish to find the terminal node $v_n$ with peak memory consumption under a memory budget, $\Mbudget$, and minimum total cost of computation.

\begin{figure*}[t]
	\centering
	\includegraphics[width=0.6\linewidth]{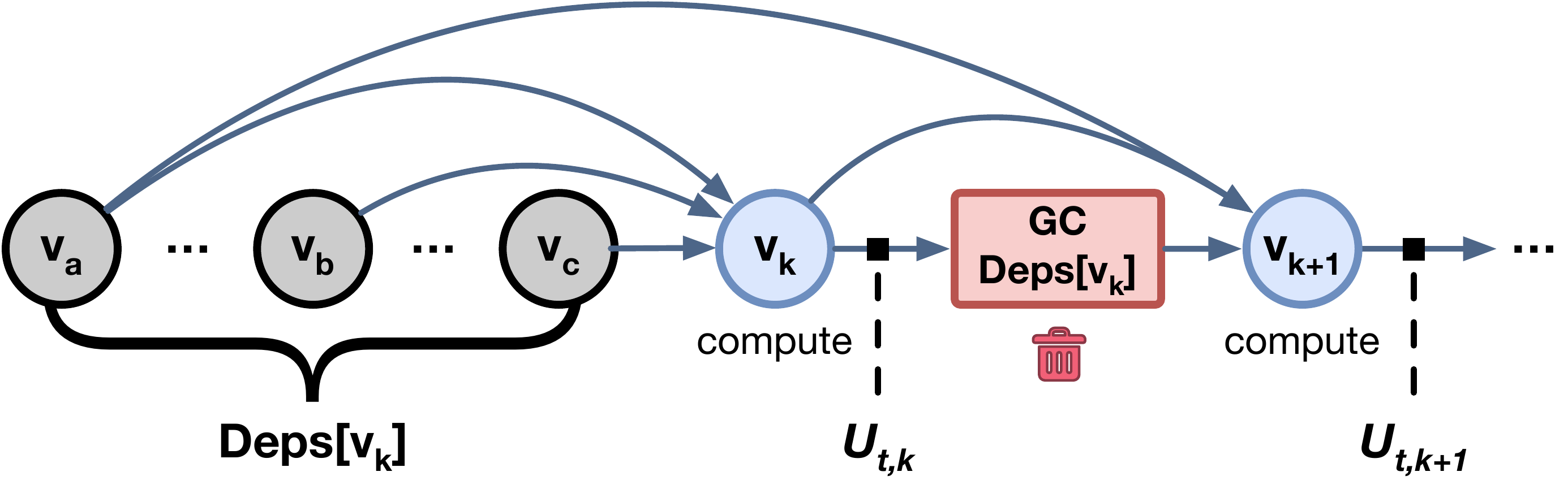}
	\caption{Dependencies of $v_k$ can only be garbage collected after it is evaluated. $U_{t,k}$ measures the memory used after evaluating $v_k$ and before deallocating its dependencies. $v_b$ and $v_c$ may be deallocated during garbage collection, but $v_a$ may not due to a forward edge.}
	\label{fig:gc_ordering}
\end{figure*}

\subsection{Representing a schedule}
\label{sec:ilp_partitioning}
We represent a schedule as a series of nodes being saved or (re)computed. We unroll the execution of the network into T stages and only allow a node to be computed once per stage.  $S_{t, i} \in \{0, 1\}$ indicates that the result of operation $i$ should be retained in memory at stage $t-1$ until stage $t$.
We also define $R_{t, i} \in \{0, 1\}$ be a binary variable reflecting whether operation i is recomputed at time step t.

Our representation generalizes checkpointing \cite{griewank_algorithm_2000, chen_training_2016, gruslys_memory-efficient_2016, siskind_divide-and-conquer_2018, feng_cutting_2018}, as values can be retained and deallocated many times, but comes at the cost of $O(Tn)$ decision variables. 

To trade-off the number of decision variables and schedule flexibility, we limit $T$
to $T=n$.
This allows for $O(n^2)$ operations and constant memory in linear graphs.

\subsection{Scheduling with ample memory}
First, consider neural network evaluation on a processor with ample memory. Even without a memory constraint, our solver must ensure that checkpointed and computed operations have dependencies resident in memory. Minimizing the total cost of computation across stages with dependency constraints yields objective (\ref{eq:objective1}):
\begin{argmini!}|l|[1]
    {R,S}{\sum_{t=1}^n \sum_{i=1}^t C_i R_{t,i}\label{eq:objective1}}
    {}{}
    \addConstraint{R_{t, j}}{\leq R_{t, i} + S_{t, i}\label{constr:compute_residency}}{\forall t ~\forall (v_i, v_j) \in E}
    \addConstraint{S_{t, i}}{\leq R_{t-1, i} + S_{t-1, i}\label{constr:cache_residency}\qquad}{\forall t \geq 2 ~\forall i}
    \addConstraint{\textstyle\sum_{i} S_{1, i}}{= 0\label{constr:noinitialcheckpoints}}{}
    \addConstraint{\textstyle\sum_{t} R_{t,n}}{\geq 1\label{constr:cover_last}}{}
    \addConstraint{R_{t, i}, S_{t, i} }{\in \{0, 1\}\label{constr:binary}}{\forall t ~\forall i}
\end{argmini!}
Constraints ensure feasibility and completion. 
Constraint (\ref{constr:compute_residency}) and (\ref{constr:cache_residency}) ensure that an operation is computed in stage $t$ only if all dependencies are available. 
To cover the edge case of the first stage, constraint (\ref{constr:noinitialcheckpoints}) specifies that no values are initially in memory. Finally, covering constraint (\ref{constr:cover_last}) ensures that the last node in the topological order is computed at some point in the schedule so that training progresses.

\subsection{Constraining memory utilization}
\label{sec:ilp_memory}

To constrain memory usage, we introduce memory accounting variables ${U_{t,k} \in \mathbb{R}_+}$ into the ILP. 
Let $U_{t,k}$ denote the memory used just after computing node $v_{k}$ in stage $t$. $U_{t,k}$ is defined recursively in terms of auxiliary binary variables $\Collectable{t,i,k}$ for $(v_i,v_k) \in E$, which specifies whether node $v_i$ may be deallocated in stage $t$ after evaluating node $v_k$.

We assume that (1) network inputs and parameters are always resident in memory and (2) enough space is allocated for gradients of the loss with respect to parameters.\footnote{While gradients can be deleted after updating parameters, we reserve constant space since many parameter optimizers such as SGD with momentum maintain gradient statistics.}
Parameter gradients are typically small, the same size as the parameters themselves. Additionally, at the beginning of a stage, all checkpointed values are resident in memory. Hence, we initialize the recurrence,
\begin{equation}
    U_{t,0} = \underbrace{\Min + 2\Mparam}_{\text{Constant overhead}} + \sum_{i=1}^n \underbrace{M_i S_{t,i}}_{\text{Checkpoints}}
    \label{constr:U_init}
\end{equation}
Suppose $U_{t,k}$ bytes of memory are in use after evaluating $v_k$. Before evaluating $v_{k+1}$, $v_k$ and dependencies (parents) of $v_k$ may be deallocated if there are no future uses. Then, an output tensor for the result of $v_{k+1}$ is allocated, consuming memory $M_{k+1}$. The timeline is depicted in Figure~{\ref{fig:gc_ordering}}, yielding recurrence (\ref{constr:mem_recurrence}):
\begin{equation}
    U_{t,k+1} = U_{t,k} - \GC{v_k}{t} + R_{t,k+1} M_{k+1},\label{constr:mem_recurrence}
\end{equation}
where $\GC{v_k}{t}$ is the amount of memory freed by deallocating $v_k$ and its parents at stage $t$. Let \begin{align*}
    \Deps{k} &= \{i : (v_i,v_k) \in E\}, \text{ and}\\
    \Users{i} &= \{j : (v_i,v_j) \in E\}
\end{align*} denote parents and children of a node, respectively. Then, in terms of auxiliary variable $\Collectable{t,i,k}$,
for $(v_i,v_k) \in E$,
\begin{align}
    &\GC{v_k}{t} = \textstyle\sum_{\substack{i \in \Deps{k}\\\cup \{k\}}} M_i * \Collectable{t,i,k},\label{eq:GC} ~\text{and} \\
    &\Collectable{t,i,k} = R_{t,k} * \underbrace{(1 - S_{t+1,i})}_{\text{Not checkpoint}} \prod_{\substack{j \in \Users{i}\\j>k}} \underbrace{(1 - R_{t,j})}_{\text{Not dep.}}
    \label{eq:collectable}
\end{align}
The second factor in \eqref{eq:collectable} ensures that $M_i$ bytes are freed only if $v_i$ is not checkpointed
for the next stage. The final factors ensure that $\Collectable{t,i,k}=0$ if any child of $v_i$ is computed in the stage, since then $v_i$ needs to be retained for later use.
Multiplying by $R_{t,k}$ in (\ref{eq:collectable}) ensures that values are only freed at most once per stage according to Theorem \ref{lemma:nodoublefree},
\begin{restatable}[No double deallocation]{theorem}{nodoublefree}
If \eqref{eq:collectable} holds for all $(v_i,v_k) \in E$, then $\sum_{k \in \Users{i}} \Collectable{t,i,k} \leq 1 ~\forall t, i$.\label{lemma:nodoublefree}
\end{restatable}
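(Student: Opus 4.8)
The plan is to use the product structure of \eqref{eq:collectable} to argue that, for any fixed stage $t$ and node $v_i$, at most one of the summands $\Collectable{t,i,k}$ with $k \in \Users{i}$ can be nonzero. Since all of the $R$ and $S$ variables are binary by \eqref{constr:binary}, each factor in \eqref{eq:collectable} lies in $\{0,1\}$, and therefore each $\Collectable{t,i,k} \in \{0,1\}$. It thus suffices to prove that the set $\{k \in \Users{i} : \Collectable{t,i,k} = 1\}$ contains at most one index.

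First I would dispatch the degenerate cases. If $S_{t+1,i} = 1$, then the common factor $(1 - S_{t+1,i})$ forces $\Collectable{t,i,k} = 0$ for every $k$, so the sum is $0$. Similarly, if no child of $v_i$ is recomputed in stage $t$, then the leading factor $R_{t,k}$ vanishes for every $k \in \Users{i}$ and the sum is again $0$. Both cases satisfy the claimed bound trivially.

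The substantive case is $S_{t+1,i} = 0$ together with a nonempty set of recomputed children $K = \{k \in \Users{i} : R_{t,k} = 1\}$. The key observation is that $\Collectable{t,i,k} = 1$ demands both $R_{t,k} = 1$ and $\prod_{j \in \Users{i},\, j > k} (1 - R_{t,j}) = 1$; the latter product equals $1$ exactly when no child of $v_i$ with a strictly larger index is recomputed. Consequently $\Collectable{t,i,k} = 1$ forces $k$ to be the \emph{largest} element of $K$. Writing $k^\star = \max K$, I would check directly that $\Collectable{t,i,k^\star} = 1$, since its three factors are $R_{t,k^\star} = 1$, $(1 - S_{t+1,i}) = 1$, and a product whose every factor $(1 - R_{t,j})$ equals $1$ because no child $j > k^\star$ of $v_i$ is recomputed; meanwhile, for every other $k \in K$ the product contains the factor $(1 - R_{t,k^\star}) = 0$, so $\Collectable{t,i,k} = 0$, and for $k \notin K$ the term vanishes through $R_{t,k} = 0$. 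Hence exactly one summand equals $1$ and the sum is exactly $1$, completing all cases.

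The only step requiring genuine care, rather than difficulty, is the ordering argument that isolates $k^\star = \max K$: one must be precise that the product in \eqref{eq:collectable} ranges over children of $v_i$ with \emph{strictly} larger index than $k$, so that the unique maximal recomputed child escapes suppression while every smaller recomputed child is zeroed out by its $(1 - R_{t,k^\star})$ factor. With the strictness of this inequality pinned down, the three cases are exhaustive and yield $\sum_{k \in \Users{i}} \Collectable{t,i,k} \leq 1$ for all $t$ and $i$.
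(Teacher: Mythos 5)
Your proof is correct and rests on the same key mechanism as the paper's: the product over children $j > k$ in \eqref{eq:collectable} forces any nonzero term to occur only at the largest recomputed child, which the paper phrases as a two-index contradiction and you phrase directly via $k^\star = \max K$. Your version additionally shows the sum equals exactly $1$ in the substantive case, but the argument is essentially identical.
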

\begin{proof}
Assume for the sake of contradiction that $\exists k_1, k_2 \in \Users{i}$ such that $\Collectable{t,i,k_1} = \Collectable{t,i,k_2} = 1$. By the first factor in \eqref{eq:collectable}, we must have $R_{t,k_1} = R_{t,k_2} = 1$. Assume without loss of generality that $k_2 > k_1$. By the final factor in \eqref{eq:collectable}, we have $\Collectable{t,i,k_1} \leq 1 - R_{t,k_2} = 0$, which is a contradiction.
\end{proof}

\subsection{Linear reformulation of memory constraint}

While the recurrence (\ref{constr:U_init}-\ref{constr:mem_recurrence}) defining $U$ is linear, the right hand size of (\ref{eq:collectable}) is a polynomial. To express $\Collectable{}$ in our ILP, it must be defined via linear constraints. 
We rely on Lemma~\ref{lemma:linearproduct} and \ref{lemma:linearindicator} to reformulate (\ref{eq:collectable}) into a tractable form.
\begin{restatable}[Linear Reformulation of Binary Polynomial]{lemma}{linearproduct}
If $x_1,\ldots,x_n \in \{0, 1\}$, then
\begin{equation*}
    \prod_{i=1}^n x_i = \begin{cases}
    1 & \sum_{i=1}^n (1-x_i) = 0\\
    0 & \text{otherwise}
    \end{cases}
\end{equation*}\label{lemma:linearproduct}
\end{restatable}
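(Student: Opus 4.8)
The plan is to prove the two branches of the piecewise identity by a straightforward case analysis on whether all the $x_i$ equal $1$, exploiting the fact that a product and a sum of binary-valued quantities each collapse to a single clean condition. The underlying observation is that both ``$\prod_{i=1}^n x_i = 1$'' and ``$\sum_{i=1}^n (1-x_i) = 0$'' are logically equivalent to the single statement that $x_i = 1$ for every $i$, so the lemma reduces to verifying this equivalence and then reading off the corresponding value of the product.

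First I would handle the case in which $x_i = 1$ for all $i$. Here every factor of the product equals $1$, so $\prod_{i=1}^n x_i = 1$, and simultaneously each term satisfies $1 - x_i = 0$, so $\sum_{i=1}^n (1-x_i) = 0$. This places us in the top branch, and the computed value $1$ matches the claim. Next I would treat the complementary case in which at least one index $j$ has $x_j = 0$. That single factor forces $\prod_{i=1}^n x_i = 0$. For the sum, I would use that $x_i \in \{0,1\}$ implies $1 - x_i \ge 0$ for every $i$, so the sum of nonnegative terms is bounded below by the single term $1 - x_j = 1 > 0$; hence $\sum_{i=1}^n (1-x_i) \ge 1 \neq 0$, placing us in the ``otherwise'' branch, where the product is indeed $0$. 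Since the two cases are exhaustive and mutually exclusive, the piecewise identity holds in full.

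There is no substantive obstacle here: the statement is an elementary Boolean identity, and the only point requiring minor care is the nonnegativity argument for the sum in the second case, which relies crucially on the binary domain $x_i \in \{0,1\}$ (the lower bound $1 - x_i \ge 0$, and hence the conclusion, would fail if the $x_i$ were permitted to be negative). The significance of the lemma lies not in its difficulty but in its downstream role: it lets the nonlinear binary product appearing in \eqref{eq:collectable} be re-expressed through an indicator on the \emph{linear} quantity $\sum_i (1 - x_i)$, which in turn is what makes the subsequent linear reformulation of the memory constraint amenable to an off-the-shelf MILP solver.
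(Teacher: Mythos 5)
Your proof is correct and follows essentially the same case analysis as the paper's: all $x_i=1$ gives product $1$ and sum $0$, while any $x_j=0$ forces the product to $0$ and the sum to be at least $1$. Your version is slightly more explicit about why the sum is nonzero in the second case, but the argument is the same.
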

\begin{proof}
If all $x_1, \ldots, x_n = 1$, then $\sum_{i=1}^n (1 - x_i) = 0$ and we have $\Pi_{i=1}^n x_i = 1$. If otherwise any $x_j = 0$, then we have $\Pi_{i=1}^n x_i = 0$, as desired.
This can also be seen as an application of De Morgan's laws for boolean arithmetic.
\end{proof}

\begin{restatable}[Linear Reformulation of Indicator Constraints]{lemma}{linearindicator}
Given $0 \leq y \leq \kappa$ where $y$ is integral and $\kappa$ is a constant upper bound on $y$, then
\begin{align*}
    x = \begin{cases}
    1 & y = 0\\
    0 & \text{otherwise}
    \end{cases}
\end{align*}
if and only if $x \in \{0, 1\}$ and $(1 - x) \leq y \leq \kappa (1 - x)$.
\label{lemma:linearindicator}
\end{restatable}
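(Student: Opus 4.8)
The plan is to prove the two directions of the biconditional separately, in each case reducing to a short case analysis on the two possible integer values of $x$ together with whether $y = 0$ or $y \geq 1$. The role of the hypotheses $0 \le y \le \kappa$ and the integrality of $y$ is to guarantee that the event ``$y \neq 0$'' coincides with ``$y \geq 1$,'' which is exactly what lets a pair of linear inequalities distinguish the two branches of the indicator. Membership $x \in \{0,1\}$ is shared between both formulations, so the real content is the equivalence of the indicator relationship with the sandwich inequality $(1-x) \le y \le \kappa(1-x)$.

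For the forward direction I would assume that $x$ equals the stated indicator and verify the two linear inequalities. If $y = 0$, then $x = 1$, and both $(1-x) = 0 \le y$ and $y = 0 \le \kappa(1-x) = 0$ hold. If instead $y \neq 0$, then $y$ integral together with $y \ge 0$ forces $y \ge 1$, while $x = 0$; now $(1-x) = 1 \le y$ holds because $y \ge 1$, and $y \le \kappa = \kappa(1-x)$ holds because $\kappa$ upper-bounds $y$. In both branches $x \in \{0,1\}$ is immediate.

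For the reverse direction I would assume $x \in \{0,1\}$ and $(1-x) \le y \le \kappa(1-x)$, and recover the indicator by splitting on the value of $x$. When $x = 1$ the inequalities collapse to $0 \le y \le 0$, forcing $y = 0$, matching the first branch. When $x = 0$ they become $1 \le y \le \kappa$, so $y \ge 1 > 0$ and in particular $y \ne 0$, matching the ``otherwise'' branch. Since $x$ is binary these two cases are exhaustive, completing the equivalence.

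The argument is essentially a truth-table verification, so no step poses a genuine obstacle; the only point requiring care is to invoke integrality (and $y \ge 0$) at precisely the place where the lower bound $(1-x) \le y$ must separate $y = 0$ from $y \ne 0$. I would make this explicit, since without integrality a fractional value such as $y = \tfrac{1}{2}$ would satisfy neither branch's inequalities for any binary $x$; it is exactly the integer constraint on $y$ that makes this linear encoding tight rather than merely a relaxation.
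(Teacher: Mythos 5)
Your proof is correct and follows essentially the same route as the paper's: a case analysis on $y=0$ versus $y\geq 1$ (using integrality and $y\geq 0$ to equate ``$y\neq 0$'' with ``$y\geq 1$'') for the forward direction, and the analogous split on $x\in\{0,1\}$ for the converse, which the paper merely asserts ``holds similarly'' while you spell it out. No gaps.
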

\begin{proof}
For the forward direction, first note that by construction, $x \in \{0, 1\}$. If $y = 0$ and $x = 1$, then 
$(1 - x) = 0 \leq y \leq 0 = \kappa (1 - x)$. Similarly, if $y \geq 1$ and $x = 0$, then 
$1 \leq y \leq \kappa$, which is true since 
$0 \leq y \leq \kappa$ and $y$ is integral. The converse holds similarly.
\end{proof}

To reformulate Constraint~\ref{eq:collectable}, let $\NumHazards{t,i,k}$ be the number of zero factors on the RHS of the constraint. This is a linear function of the decision variables,
$$\NumHazards{t,i,k} = (1 - R_{t,k}) + S_{t+1,i} + \sum_{\substack{j \in \Users{i}\\j>k}} R_{t,j}$$
Applying Lemma \ref{lemma:linearproduct} to the polynomial constraint, we have,
\begin{align}
    \Collectable{t,i,k} = \begin{cases}
    1 & \NumHazards{t,i,k} = 0 \\
    0 & \text{otherwise}
    \end{cases}
    \label{eq:freeindicator}
\end{align}
By Lemma \ref{lemma:linearindicator}, if $\kappa$ is the maximum value that $\NumHazards{t,i,k}$ can assume, the following constraints are equivalent to (\ref{eq:freeindicator}),
\begin{subequations}
\begin{align}
    \Collectable{t,i,k} &\in \{0, 1\}\label{constr:cble_binary}\\
    1-\Collectable{t,i,k} &\leq \NumHazards{t,i,k}\label{constr:cble_upper}\\
    \kappa(1-\Collectable{t,i,k}) &\geq \NumHazards{t,i,k}\label{constr:cble_lower}
\end{align}
\end{subequations}

\subsection{Tractability via frontier-advancing stages}
Fixing the execution order of nodes in the graph can improve the running time of the algorithm. In eager-execution frameworks such as PyTorch, the order is given by user code and operations are executed serially. Separating ordering and allocation is common in compiler design, and both LLVM \cite{lattner_llvm:_2002} and GCC \cite{olesen_register_2011} have separate instruction scheduling and register allocation passes.

Any topological order of the nodes is a possible execution order. 
Given a topological order, such as the one introduced in Section~\ref{sec:ilp_problem}, we partition the schedule into frontier-advancing stages such that node $v_i$ is evaluated for the first time in stage $i$. We replace constraints (\ref{constr:noinitialcheckpoints}, \ref{constr:cover_last}) that ensure the last node is computed with stricter constraints (\ref{constr:frontier_adv}-\ref{constr:Rlowtri}),
\begin{subequations}\begin{align}
R_{i,i} &= 1 ~~\forall i ~~~\text{(frontier-advancing partitions)}\label{constr:frontier_adv}\\
\textstyle\sum_{i \geq t} S_{t,i} &= 0 ~~~\text{(lower tri., no initial checkpoints)} \label{constr:Slowtri}\\
\textstyle\sum_{i > t} R_{t,i} &= 0 ~~~\text{(lower triangular)} \label{constr:Rlowtri}
\end{align}\end{subequations}
This reduces the feasible set, constraining the search space and improving running time.
For an 8 layer ($n=17$) linear graph neural network with unit $C_i, M_i$ at a memory budget of 4, Gurobi optimizes the unpartitioned MILP in 9.4 hours and the partitioned MILP in 0.23 seconds to the same objective. In Appendix~\ref{sec:integrality_gap}, we analyze the integrality gap of both forms of the problem to understand the speedup.

\subsection{Complete Integer Linear Program formulation}
\label{sec:ilp:complete_ilp}
The complete memory constrained MILP follows in (\ref{eq:objective2}), with $O(|V||E|)$ variables and constraints. 
\begin{argmini}|l|[0]
	{R,S,U,\Collectable{}}{\sum_{t=1}^n \sum_{i=1}^t C_i R_{t,i}\label{eq:objective2}}
	{}{}
    \addConstraint{(\ref{constr:compute_residency}), (\ref{constr:cache_residency}), (\ref{constr:binary}), (\ref{constr:U_init}), (\ref{constr:mem_recurrence})}{}
    \addConstraint{(\ref{constr:cble_binary}), (\ref{constr:cble_upper}), (\ref{constr:cble_lower}), (\ref{constr:frontier_adv}), \eqref{constr:Slowtri}, \eqref{constr:Rlowtri}}{}
	\addConstraint{U_{t,k} \leq \Mbudget}{}{}
\end{argmini}

\subsection{Constraints implied by optimality}
\label{sec:ilp:implied_constraints}

Problem~\ref{eq:objective2} can be simplified by removing constraints implied by optimality of a solution. 
${\Collectable{t,k,k} = 1}$ only if operation $k$ is spuriously evaluated with no uses of the result. Hence, the solver can set $R_{t,k}=0$ to reduce cost. We eliminate $|V|^2$ variables $\Collectable{t,k,k}$, assumed to be 0, by modifying (\ref{eq:GC}) to only sum over $i \in \Deps{k}$. These variables can be computed inexpensively after solving.

\subsection{Generating an execution plan}
\label{sec:ilp:execution_plan}

Given a feasible solution to (\ref{eq:objective2}), $(R, S, U, \Collectable{})$, Algorithm~\ref{alg:execution_plan} generates an execution plan
via a row major scan of $R$ and $S$ with deallocations determined by $\Collectable{}$. 
An execution plan is a program $P = (\com{1},\ldots,\com{k})$ with $k$ statements. When statement $\texttt{\%r = compute v}$ 
is interpreted, operation $v$ is evaluated. The symbol $\texttt{\%r}$ denotes a virtual register used to track the resulting value. Statement $\texttt{deallocate \%r}$ marks the value tracked by virtual register $\texttt{\%r}$ for garbage collection.

The execution plan generated by Algorithm~\ref{alg:execution_plan} is further optimized by moving deallocations earlier in the plan when possible. Spurious checkpoints that are unused in a stage can be deallocated at the start of the stage rather than during the stage. Still, this code motion is unnecessary for feasibility as the solver guarantees that the unoptimized schedule will not exceed the desired memory budget.

The execution plan can either be interpreted during training, or encoded as a static computation graph. In this work, we generate a static graph $G'=(V', E')$ from the plan, which is executed by a numerical machine learning framework. See Section~\ref{sec:eval:methodology} for implementation details.
\begin{algorithm}[t]
    \caption{Generate execution plan}
    \label{alg:execution_plan}
\begin{algorithmic}
    \STATE {\bfseries Input:} graph $G=(V, E)$, feasible $(R, S, \Collectable{})$
    \STATE {\bfseries Output:} execution plan $P=(\com{1}, \ldots, \com{k})$
    \STATE Initialize $\Regs{1\ldots |V|} = -1$, $r = 0$, $P = ()$.
    \FOR{$t=1$ {\bfseries to} $|V|$}
        \FOR{$k=1$ {\bfseries to} $|V|$}
            \IF{$R_{t,k}$}
            	\STATE // \textit{Materialize $v_k$}
                \STATE add \texttt{\%$r$ = compute  v\textsubscript{k}} to $P$
                \STATE $\Regs{k} = r$
                \STATE $r = r + 1$
            \ENDIF
	    	\STATE // \textit{Free $v_k$ and dependencies}
            \FOR{$i \in \Deps{k} \cup \{k\}$}
                \IF{$\Collectable{t, i, k}$}
                    \STATE add \texttt{deallocate \%$\Regs{i}$} to $P$
                \ENDIF
            \ENDFOR
        \ENDFOR
    \ENDFOR
    \STATE \textbf{return} $P$
\end{algorithmic}
\end{algorithm}

\subsection{Cost model}
\label{sec:ilp:cost_model}
To estimate the runtime of a training iteration under a rematerialization plan, we apply an additive cost model (\ref{eq:objective1}), incurring cost $C_i$ when node $v_i$ is evaluated. Costs are determined prior to MILP construction by profiling network layers on target hardware with random inputs across a range of batch sizes and input shapes, and exclude static graph construction and input generation time. 
As neural network operations consist of dense numerical kernels such as matrix multiplication, these runtimes are low variance and largely independent of the specific input data \cite{jia_exploring_2018, sivathanu_astra:_2019}. However, forward pass time per batch item decreases with increasing batch size due to improved data parallelism \cite{canziani_analysis_2016}, so it is important to compute costs with appropriate input dimensions.

The memory consumption of each value in the data-flow graph is computed statically as input and output sizes are known. 
Values are dense, multi-dimensional tensors stored at 4 byte floating point precision. The computed consumption $M_i$ is used to construct memory constraints (\ref{constr:U_init}-\ref{constr:mem_recurrence}).

\section{Approximation}
\label{sec:approx}
Many of our benchmark problem instances are tractable to solve using off-the-shelf integer linear program solvers, with practical solve times ranging from seconds to an hour. ILP results in this paper
are obtained with a 1 hour time limit on a computer with at least 24 cores. 
Relative to training time, \eg{} 21 days for the BERT model~\cite{devlin_bert:_2018}, solving the ILP adds less than a percent of runtime overhead.

While COTS solvers such as COIN-OR~\cite{forrest_coin-or_2019} leverage methods like branch-and-bound to aggressively prune the decision space, they can take superpolynomial time in the worst-case and solving ILPs is NP-hard in general.
In the worst-case, for neural network architectures with hundreds of layers, it is not feasible to solve the rematerialization problem via our ILP. An instance of the VGG16 architecture~\cite{simonyan_very_2014} takes seconds to solve. For DenseNet161~\cite{huang_densely_2017}, no feasible solution was found within one day. 

For many classical NP-hard problems, approximation algorithms give solutions close to optimal with polynomial runtime. We review a linear program that produces fractional solutions in polynomial time in Section~\ref{sec:approx:relaxlp}. 
Using the fractional solutions, we present a two-phase rounding algorithm in Section~\ref{sec:approx:partial_rounding} that rounds a subset of the decision variables, then finds a minimum cost, feasible setting of the remaining variables to find near-optimal integral solutions. 

\subsection{Relaxing integrality constraints}
\label{sec:approx:relaxlp}
By relaxing integrality constraints (\ref{constr:binary}), the problem becomes trivial to solve as it is a linear program over continuous variables. It is well known that an LP is solvable in polynomial time via Karmarkar's algorithm~\cite{karmarkar1984new} or barrier methods~\cite{nesterov1994interior}. With relaxation $R,S, \Collectable{} \in [0, 1]$,
the objective (\ref{eq:objective1}) defines a lower-bound for the cost of the optimal integral solution. 

Rounding is a common approach to find approximate integral solutions given the result of an LP relaxation. For example, one can achieve a $\frac{3}{4}$-approximation for MAX SAT~\cite{yannakakis1994approximation} via a simple combination of randomized rounding ($\mathbf{Pr} \left[ x^{\text{int}}_{i} = 1 \right] = x_i^*$) and deterministic rounding ($x^{\text{int}}_{i} = 1$ if $x^*_i \geq p$, where commonly $p=0.5$). 

We attempt to round the fractional solution $R^*, S^*$ using these two strategies, and then apply Algorithm~\ref{alg:execution_plan} to $R^\text{int}, S^\text{int}$.
However, direct application of deterministic rounding returns infeasible results: the rounded solution violates constraints.
Randomized rounding may show more promise as a single relaxed solution can be used to sample many integral solutions, some of which are hopefully feasible. Unfortunately, using randomized rounding with the LP relaxation for VGG16 at a 4$\times$ smaller budget than default, we could not find a single feasible solution out of 50,000 samples.

\begin{algorithm}[t!]
    \caption{Two-phase rounding}
    \label{alg:two_phase_rounding}
\begin{algorithmic}
    \STATE {\bfseries Input:} Fractional checkpoint matrix $S^*$ from LP
    \STATE {\bfseries Output:} Binary $S^\text{int}$, $R^\text{int}$, $\Collectable{}$
    \STATE Round $S^*$ deterministically: $S_{t,i}^\text{int} \gets \mathbbm{1}[S_{t,i}^* > 0.5]$
    \STATE $R^\text{int} \gets \mathbf{I}_n$ thereby satisfying (\ref{constr:frontier_adv})
    \WHILE{$\exists t\geq 2, i\in[n]$ such that $S^\text{int}_{t,i} > R^\text{int}_{t-1,i} + S^\text{int}_{t-1,i}$ \ie{} \eqref{constr:cache_residency} violated}
        \STATE Compute $v_i$ to materialize checkpoint: $R^\text{int}_{t-1,i} \gets 1$
    \ENDWHILE
    \WHILE{$\exists t\geq 1, (i,j) \in E$ such that $R^\text{int}_{t,j} > R^\text{int}_{t,i} + S^\text{int}_{t,i}$\\ \ie{} \eqref{constr:compute_residency} violated}
        \STATE Compute $v_i$ as temporary for dependency: $R^\text{int}_{t,i} \gets 1$
    \ENDWHILE
    \STATE Evaluate $\Collectable{}$ by simulating execution
    \STATE \textbf{return} $S^\text{int}$, $R^\text{int}$, $\Collectable{}$
\end{algorithmic}
\end{algorithm}

\subsection{A two-phase rounding strategy}
\label{sec:approx:partial_rounding}

To find feasible solutions, we introduce \textit{two-phase rounding}, detailed in Algorithm~\ref{alg:two_phase_rounding}. Two-phase rounding is applicable when a subset of variables can be solved in polynomial time given the remaining variables. Our approximation algorithm only rounds the checkpoint matrix $S^*$. Given $S^*$, we solve for the conditionally optimal binary computation matrix $R^\text{int}$ by setting as few values to $1$ as possible. 
Algorithm~\ref{alg:two_phase_rounding} begins with an all-zero matrix $R^\text{int} = 0$, then iteratively corrects violated correctness constraints.

Note that during any of the above steps, once we set some $R_{i,j}^\text{int} = 1$, the variable is never changed. Algorithm~\ref{alg:two_phase_rounding} corrects constraints in a particular order so that constraints that are satisfied will continue to be satisfied as other violated constraints are corrected. The matrix $R^\text{int}$ generated by this rounding scheme will be optimal up to the choice of $S^\text{int}$ as every entry in $R^\text{int}$ is set to $1$ if and only if it is necessary to satisfy a constraint. In implementation, we detect and correct violations of (\ref{constr:compute_residency}) in reverse topological order for each stage, scanning $R^\text{int},S^\text{int}$ matrices from right to left.

\subsection{Memory budget feasibility}
Since we approximate $S$ by rounding the fractional solution, $S^\text{int}, R^\text{int}$ can be infeasible by the budget constraint $U_{t,k} \leq \Mbudget$. While the fractional solution may come under the budget and two-phase rounding preserves correctness constraints, the rounding procedure makes no attempt to maintain budget feasibility. Therefore, we leave an allowance on the total memory budget constraint ($U_{t, k} \leq (1 - \epsilon) \Mbudget$). We empirically find $\epsilon = 0.1$ to work well.

\begin{figure*}[t]
	\centering
	\includegraphics[width=\textwidth]{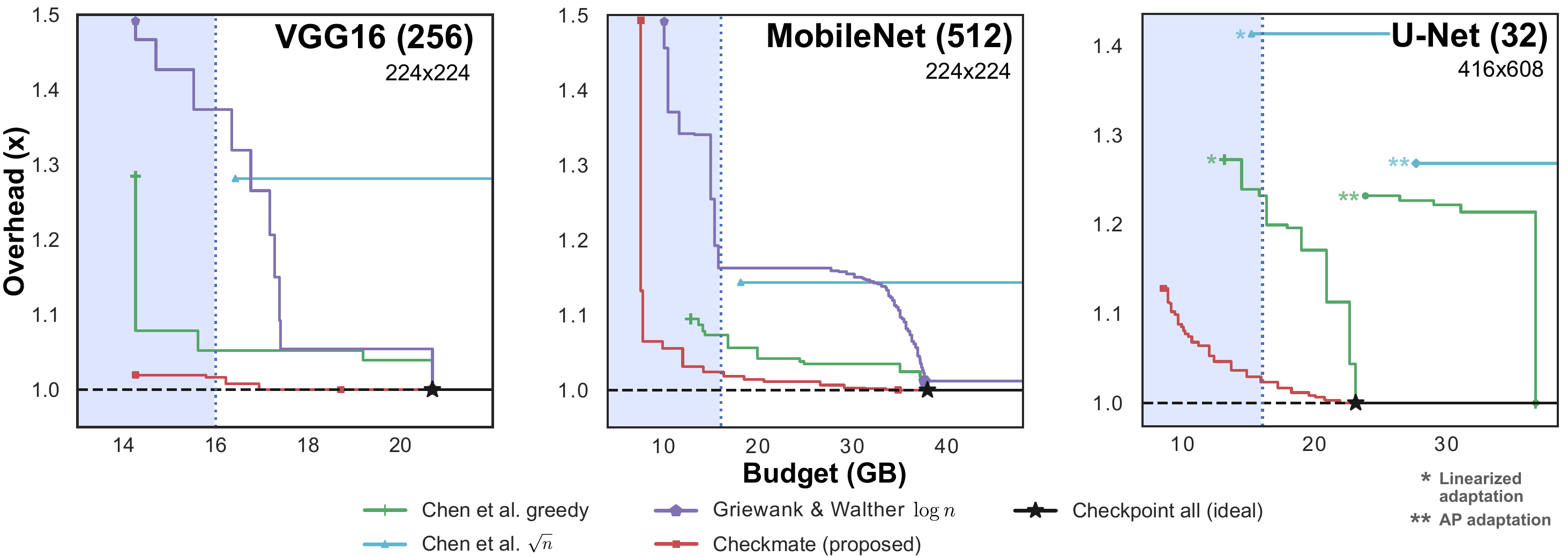}
	\vspace{-1em}
	\caption{Computational overhead versus memory budget for (a) VGG16 image classification NN \cite{simonyan_very_2014}, (b) MobileNet image classification NN, and (c) the U-Net semantic segmentation NN \cite{ronneberger_u-net:_2015}. Overhead is with respect to the best possible strategy without a memory restriction based on a profile-based cost model of a single NVIDIA V100 GPU. For U-Net (c), at the 16 GB V100 memory budget, we achieve a $1.20\times$ speedup over the best baseline---linearized greedy---and a $1.38\times$ speedup over the next best---linearized $\sqrt{n}$. \textbf{Takeaway:} our model- and hardware-aware solver produces in-budget solutions with the lowest overhead on linear networks (a-b), and dramatically lowers memory consumption \textit{and} overhead on complex architectures (c).}
	\label{fig:budget_sweep}
\end{figure*}

\section{Evaluation}
\label{sec:eval}

In this section, we investigate the impact of tensor rematerialization on the cost and memory usage of DNN training. We study the following experimental questions: (1) \textit{What is the trade-off between memory usage and computational overhead when using rematerialization?} (2) \textit{Are large inputs practical with rematerialization?} and (3) \textit{How well can we approximate the optimal rematerialization policy?}

We compare our proposed solver against baseline heuristics on representative image classification and high resolution semantic segmentation models including VGG16, VGG19, ResNet50, MobileNet, U-Net and FCN with VGG layers, and SegNet. 
As prior work is largely limited to 
linear graphs, we propose novel extensions where necessary for comparison. 
Results show that optimal rematerialization allows significantly lower computational overhead than baselines at all memory budgets, and lower memory usage than previously possible. As a consequence, optimal rematerialization allows training with larger input sizes than previously possible, up to 5.1$\times$ higher batch sizes on the same accelerator. Finally, we find that our two-phase rounding approximation algorithm finds near-optimal solutions in polynomial time.

\subsection{Baselines and generalizations}
\label{sec:baselines}

Table~\ref{tab:baselines} summarizes baseline rematerialization strategies.
The nominal evaluation strategy stores all features generated during the
forward pass for use during the backward pass---this is the default in frameworks such as TensorFlow. Hence, every layer is computed once.
We refer to this baseline as \textit{Checkpoint all}, an ideal approach given ample memory.

On the linear graph architectures, such as VGG16 and MobileNet (v1), we directly apply prior work from 
\citet{griewank_algorithm_2000} and \citet{chen_training_2016}, baselines referred to as \textit{Griewank and Walther $\log{n}$}, \textit{Chen et al. $\sqrt{n}$} and \textit{Chen et al. greedy}. To build a tradeoff curve for computation versus memory budget, we search over the segment size hyperparameter $b$ in the greedy strategy. 
However, these baselines cannot be 
used for modern architectures with residual connections. For a fair comparison, we extend the $\sqrt{n}$ and greedy algorithms to apply to general computation graphs (\eg{} ResNet50 and U-Net).

\citet{chen_training_2016} suggests manually annotating good checkpointing candidates in a computation graph. For the first extensions, denoted by \textit{AP $\sqrt{n}$} and \textit{AP greedy}, we automatically identify \textit{articulation points}, or \textit{cut vertices}, vertices that disconnect the forward pass DAG, and use these as candidates. The heuristics then select a subset of these candidates, and we work backwards from the checkpoints to identify which nodes require recomputation.

Still, some networks have few articulation points, including U-Net. We also extend heuristics by treating the original graph as a linear network, with nodes connected in topological order, again backing out the minimal recomputations from the selected checkpoints. These extensions are referred to as \textit{Linearized $\sqrt{n}$} and \textit{Linearized greedy}.

Sections~\ref{sec:eval:AP} and \ref{sec:eval:linearized} provide more details on our generalizations. Note that all proposed generalizations exactly reproduce the original heuristics on linear networks.

\subsection{Evaluation setup}
\label{sec:eval:methodology}

\OURS{} is implemented in Tensorflow 2.0~\cite{abadi_tensorflow:_2016}, accepting user-defined models 
expressed via the high-level Keras interface. 
We extract the forward and backward computation graph, then construct and solve optimization problem~(\ref{eq:objective2}) with the Gurobi mathematical
programming library as an integer linear program.
Finally, \OURS{} translates solutions into execution plans and constructs a new static training graph.
Together, these components form the \OURS{} system, illustrated in Figure~\ref{fig:system}.

To accelerate problem construction, decision variables $R$ and $S$ are expressed as lower triangular matrices, as are accounting variables $U$. $\Collectable{}$ is represented as a $|V|\times{}|E|$ matrix. 
Except for our maximum batch size experiments, solutions are generated with a user-configurable time limit of $3600$ seconds, though the majority of problems solve within minutes. Problems with exceptionally large batch sizes or heavily constrained memory budgets may reach this time limit while the solver attempts to prove that the problem is infeasible. The cost of a solution is measured with a profile-based cost model (Section~\ref{sec:ilp:cost_model}) and compared to the ideal, unachievable cost with no recomputation.

The feasible set of our optimal ILP formulation is a superset of baseline heuristics. We implement baselines as a static policy for the decision variable $S$ and then solve for the lowest-cost recomputation schedule using a similar procedure to that described in Algorithm~\ref{alg:two_phase_rounding}.

\subsection{What is the trade-off between memory usage and computational overhead?}
\label{sec:eval:tradeoff}
Figure \ref{fig:budget_sweep} compares remateralization strategies on 
VGG-16, MobileNet, and U-Net. The y-axis shows the 
computational overhead of checkpointing in terms of time as compared to
baseline. The time is computed by profiling each individual layer of the
network. The x-axis shows the total memory budget required to run each model
with the specified batch size, computed for single precision training. 
Except for the $\sqrt{n}$ heuristics, each rematerialization algorithm has a knob to trade-off the
amount of recomputation and memory usage, where a smaller memory budget leads to higher overhead.

\textbf{Takeaways:} For all three DNNs,
\OURS{} produces clearly faster execution plans as compared 
to algorithms proposed by \citet{chen_training_2016} and \citet{griewank_algorithm_2000} -- over $1.2\times$
faster than the next best on U-Net at the NVIDIA V100 memory budget. Our framework allows training a U-Net
at a batch size of 32 images per GPU with less than 10\% higher overhead. This would require 23 GB of memory without
rematerialization, or with the original baselines without our generalizations.

\subsection{Are large inputs practical with rematerialization?}

\begin{figure}[t]
	\centering
	\includegraphics[width=\linewidth]{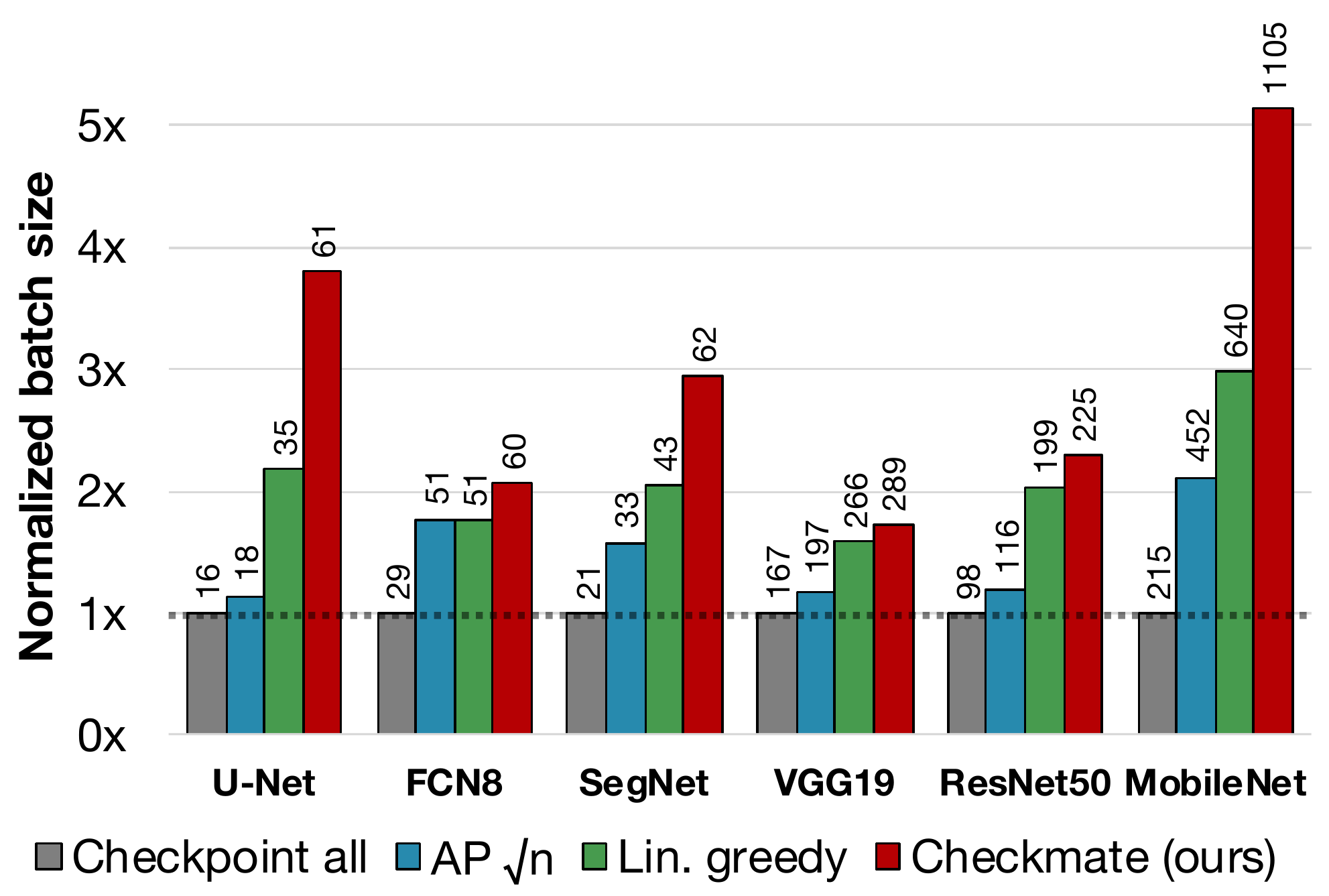}
	\vspace{-1em}
	\caption{
		Maximum batch size possible on a single NVIDIA V100 GPU when using different generalized rematerialization strategies with at most a single extra forward pass. We enable increasing batch size by up to \emph{$5.1\times$} over the current practice of caching all activations (on MobileNet), and up to $1.73\times$ over the best checkpointing scheme (on U-Net).}
	\label{fig:m}
\end{figure}

The maximum batch size enabled by different rematerialization strategies 
is shown in Figure~\ref{fig:m}. The y-axis shows the theoretical maximum batch size we could feasibly train with bounded compute cost. 
This is calculated by enforcing that the total cost must be less than the cost of 
performing just one additional forward pass. 
That is, in Figure~\ref{fig:m} the cost is at most an additional forward pass higher, \emph{if} the specified batch size would have fit in GPU memory. To find Checkmate's maximum batch size, we reformulate Problem (\ref{eq:objective2}) to maximize a batch size variable $B \in \mathbb{N}$ subject to modified memory constraints that use $B*M_i$ in place of $M_i$ and subject to an additional cost constraint,

\begin{equation}
	\sum_{t=1}^n \sum_{i=1}^t C_i R_{t,i} \leq 2 \sum_{v_i \in G_\text{fwd}} C_i + \sum_{v_i \in G_\text{bwd}} C_i. \label{eq:cost_constr}
\end{equation}

The modified integer program has quadratic constraints, and is difficult to solve. We set a time limit of one day for the experiment, but Gurobi may be unable to reach optimality within that limit. Figure~\ref{fig:m} then provides a lower bound on the maximum batch size that \OURS{} can achieve.

For fair comparison on the non-linear graphs used in U-Net, FCN, and ResNet, we use the AP $\sqrt{n}$ and linearized greedy baseline generalizations described in Section~\ref{sec:baselines}. 
For the baselines, we iterate over batch sizes, find candidate solutions (multiple candidates for linearized greedy), and filter out the solutions that cost more than an additional forward pass or that would exceed the 16GB memory budget. The iteration stops when no solutions are available.

Costs are measured in FLOPs, determined statically. U-Net, FCN8 and SegNet semantic segmentation networks use a resolution of $416\times 608$, and classification networks ResNet50, VGG19 and MobileNet use resolution $224\times 224$.

\textbf{Takeaways:} We can theoretically increase the batch size of U-Net to $61$ at a high resolution, an unprecedented result.
For many tasks such as semantic segmentation, where U-Net is commonly used, it is not possible to use
batch sizes greater than $16$, depending on resolution. This is sub-optimal for batch normalization layers, and
being able to increase the batch size by $3.8\times$ ($61$ vs $16$ at this resolution) is quite significant.
Orthogonal approaches to achieve this
include model parallelism and distributed memory batch normalization which can be
significantly more difficult to implement and have high communication costs.

Furthermore, for MobileNet, \OURS{} allows a batch size of $1105$ which
is $1.73\times$ higher than the best baseline solution, a greedy heuristic, and $5.1\times$ common practice, checkpointing all activations.
The same schedules can also be used to increase image resolution rather than batch size.

\subsection{How well can we approximate the optimal rematerialization policy?}
\label{sec:eval:approx}

To understand how well our LP rounding strategy (Section~\ref{sec:approx}) approximates the ILP, we measure the ratio $\textsc{Cost}_\text{approx} / \textsc{Cost}_\text{opt}$, \ie{} the speedup of the optimal schedule, in FLOPs. As in Section~\ref{sec:eval:tradeoff}, we solve each strategy at a range of memory budgets, then compute the geometric mean of the ratio across budgets. The aggregated ratio is used because some budgets are feasible via the ILP but not via the approximations. Table~\ref{table:approx} shows results. 
The two-phase deterministic rounding approach has approximation factors close to optimal, at most $1.06\times$ for all tested architectures.

\setlength{\tabcolsep}{4pt}
\begin{table}[t]
\centering
\resizebox{\linewidth}{!}{\begin{tabular}{r|cccc} 
\toprule
\multicolumn{1}{l}{} & \multicolumn{1}{c}{\begin{tabular}[c]{@{}c@{}}AP\\$\sqrt{n}$\end{tabular}} & \multicolumn{1}{c}{\begin{tabular}[c]{@{}c@{}}AP\\greedy\end{tabular}} & \multicolumn{1}{c}{\begin{tabular}[c]{@{}c@{}}Griewank\\$\log n$\end{tabular}} & \multicolumn{1}{c}{\begin{tabular}[c]{@{}c@{}}Two-phase\\LP rounding\end{tabular}} \\ 
\midrule
MobileNet &   1.14$\times$ & 1.07$\times$ & 7.07$\times$ & \textbf{1.06$\times$} \\
VGG16 &       1.28$\times$ & 1.06$\times$ & 1.44$\times$ & \textbf{1.01$\times$} \\
VGG19 &       1.54$\times$ & 1.39$\times$ & 1.75$\times$ & \textbf{1.00$\times$} \\
U-Net & 1.27$\times$ & 1.23$\times$ & -          &\textbf{1.03$\times$} \\
ResNet50 &    1.20$\times$ & 1.25$\times$ & -          &\textbf{1.05$\times$} \\
\bottomrule
\end{tabular}}
\caption{Approximation ratios for baseline heuristics and our LP rounding strategy. Results are given as the geometric mean speedup of the optimal ILP across feasible budgets. \label{table:approx}}
\end{table}

\section{Conclusion}
\label{sec:conclusion}
One of the main challenges when training large neural networks is the limited
capacity of high-bandwidth memory on accelerators such as GPUs and TPUs. 
This has created a memory wall that limits the size of the models that can be trained.
The bottleneck for state-of-the-art model development is now memory 
rather than data and compute availability,
and we expect this trend to worsen in the future.

To address this challenge,
we proposed a novel rematerialization algorithm which allows large models to be
trained with limited available memory.
Our method does not make the strong assumptions required in prior work, supporting general
non-linear computation graphs such as residual networks and capturing
the impact of non-uniform memory usage and computation cost throughout the graph
with a hardware-aware, profile-guided cost model. 
We presented an ILP formulation for the problem, implemented the \OURS{} system 
for optimal rematerialization in TensorFlow, and tested the proposed 
system on a range of 
neural network models. In evaluation, we find that optimal rematerialization has minimal computational overhead at a wide range of memory budgets and showed 
that \OURS{} enables practitioners to train high-resolution models with significantly larger batch sizes.
Finally, a novel two-phase rounding strategy closely approximates the optimal solver.

\clearpage
\section*{Acknowledgements}
\label{sec:acks}
We would like to thank Barna Saha and Laurent El Ghaoui for guidance on approximation, Mong H. Ng for help in evaluation, and the paper and artifact reviewers for helpful suggestions. In addition to NSF CISE Expeditions Award CCF-1730628 and ONR PECASE N000141612723, this work was supported by gifts from Alibaba, Amazon Web Services, Ant Financial, CapitalOne, Ericsson, Facebook, Futurewei, Google, Intel, Microsoft, NVIDIA, Scotiabank, Splunk and VMware. This work was also supported by the NSF GRFP under Grant No. DGE-1752814. Any opinions, findings, and conclusions or recommendations expressed in this material are those of the author(s) and do not necessarily reflect the views of the NSF.

\bibliography{references}
\bibliographystyle{mlsys2020}

\appendix

\begin{figure*}[t]
    \centering
    \includegraphics[width=\textwidth]{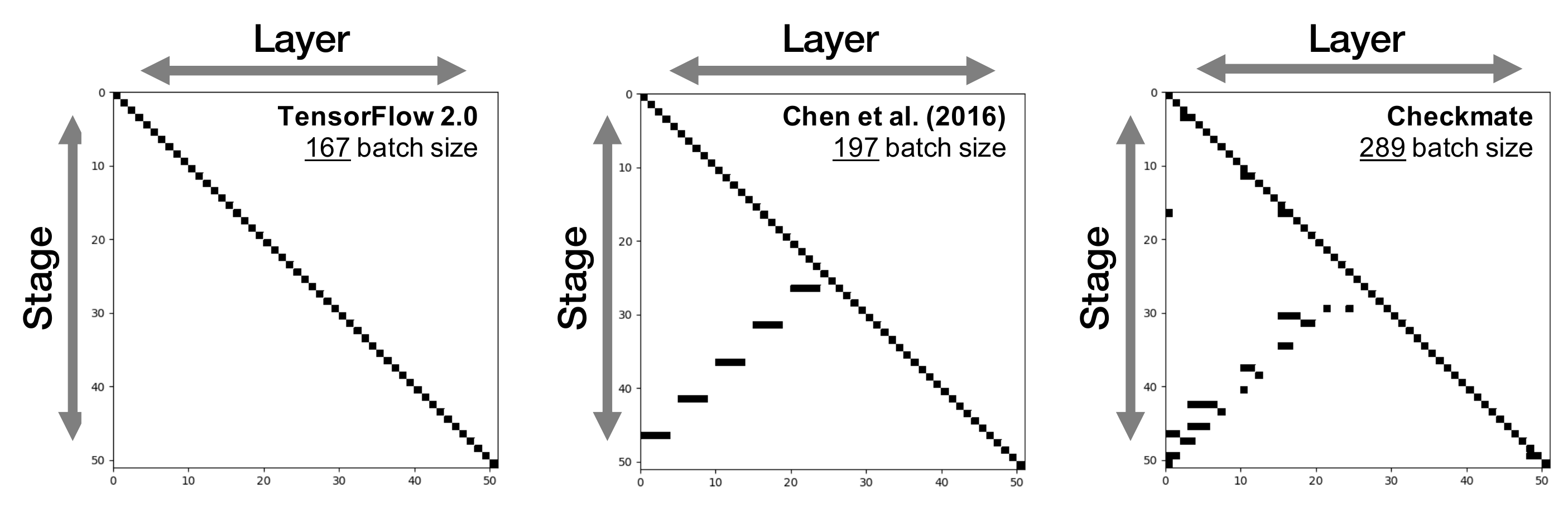}
    \caption{We visualize schedules ($R$ matrix) for VGG19. The $R$ matrix denotes when each layer in a neural network is evaluated. Rows refer to execution stage while columns refer to a particular layer. We make no distinction between forward and backward nodes. TensorFlow 2.0 can train VGG19 on a V100 with a batch size of 167. By applying the heuristic from \citet{chen_training_2016}, a V100 can sustain training at a batch size od 197. However, our proposed method in \OURS{} sustains training at a batch size of 289, representing a 73\% improvement. The integer linear program took 10 seconds to solve to optimality.} 
    \label{fig:sched_viz_segnet}
\end{figure*}

\section{Integrality gap}
\label{sec:integrality_gap}

To understand why the partitioned variant of the MILP (Section~\ref{sec:ilp_partitioning}) is faster to solve via branch-and-bound, we can measure the integrality gap for particular problem instances. The integrality gap is the maximum ratio between the optimal value of the ILP and its relaxation, defined as follows:
\begin{align*}
    IG = \max_{I} \frac{\textsc{Cost}_{int}}{\textsc{Cost}_{frac}},
\end{align*}
where $\textsc{Cost}_{int}$ and $\textsc{Cost}_{frac}$ are the optimal value the ILP and that of its relaxation, respectively. $I$ describes a problem instance, $$I = (G, C, M, \Mbudget).$$ As our ILP is a minimization problem, $\textsc{Cost}_{int} \geq \textsc{Cost}_{frac}$ for all $I$, and $IG \geq 1$. 
While it is not possible to measure the ratio between the ILP and LP solutions for all problem instances, the ratio for any particular problem instance gives a lower bound on the integrality gap.

For the 8-layer linear neural network graph discussed in Section~\ref{sec:ilp_partitioning}, frontier-advancement reduces the integrality gap from 21.56 to 1.18, \ie{} the LP relaxation is significantly tighter. In branch-and-bound algorithms for ILP optimiztion, a subset of feasible solutions can be pruned if the LP relaxation over the subset yields an objective higher than the best integer solution found thus far. With a tight LP relaxation, this condition for pruning is often met, so fewer solutions need to be enumerated.

\section{Generalizations of prior work}
\subsection{AP $\sqrt{n}$ and AP greedy}\label{sec:eval:AP} We identify Articulation Points (AP) in the undirected form of 
the forward pass data-flow graph as candidates for checkpointing. APs are vertices that 
increase the number of connected components (\ie{} disconnect) the graph if removed, and can be 
identified in time $O(V + E)$ via a modified DFS traversal \cite{holder_graph_2008}.
An articulation 
point $v_a$ is a good candidate for checkpointing as subsequent vertices in the topological order have 
no dependencies on vertices before $v_a$ in the order. DNN computation graphs are connected, so each intermediate tensor can be reconstructed 
from a single articulation point earlier in the topological order, or the input if there is no such AP. APs include the input and 
output nodes of residual blocks in ResNet, but not vertices inside blocks. 
We apply Chen's heuristics to checkpoint a subset of these candidates, then solve for the optimal recomputation plan $R$ to restore correctness. Solving for $R$ ensures that a node's dependencies are resident prior to evaluation.

We could find $R$ by solving the optimization problem~(\ref{eq:objective2}) with additional constraints on $S$ that encode the heuristically selected checkpoints. However, as $S$ is given, the optimization is solvable in $O(|V||E|)$ via a graph traversal per row of $R$ that fills in entries when a needed value is not in memory by the same process described in Section~\ref{sec:approx:partial_rounding}.

\subsection{Linearized $\sqrt{n}$ and Linearized greedy}\label{sec:eval:linearized} The forward graph of the DNN ${G_\text{fwd} = (V_\text{fwd}, E_\text{fwd})}$ can be treated as a linear graph ${G_\text{lin} = (V_\text{fwd}, E_\text{lin})}$ with edges connecting consecutive vertices in a topological order: $$E_\text{lin} = \{(v_1, v_2), (v_2, v_3), \ldots, (v_{L-1}, v_L)\}$$ 
While $G_\text{lin}$ does not properly encode data dependencies, it is a linear graph that baselines can analyze. To extend a baseline, we apply it to $G_\text{lin}$, generate checkpoint matrix $S$ from the resulting checkpoint set, and find the optimal $R$ as with the AP baselines.

\section{Hardness of rematerialization}
\label{sec:hardness}

\citet{sethi_complete_1973} reduced 3-SAT to a decision problem based on register allocation in straight line programs, with no recomputation permitted. Such programs can be represented by result-rooted Directed Ayclic Graphs (DAGs), with nodes corresponding to operations and edges labeled by values. In Sethi's graphs, the desired results are the roots of the DAG. If a program has no common subexpressions, \ie{} the graph forms a tree, optimal allocation is possible via a linear time tree traversal \cite{nakata_compiling_1967}. However, Sethi's reduction shows a register allocation decision problem in the general case---whether a result-rooted DAG can be computed with fewer than $k$ registers without recomputation---is NP-complete.

\balance{}
The decision problem characterizes computation of a DAG as a sequence of four possible moves of stones, or registers, on the nodes of the graph, analogous to statements discussed in Section~\ref{sec:ilp:execution_plan}. 
The valid moves are to (1) place a register at a leaf, computing it, or (2) pick up a register from a node. Also, if there are registers at all children of a node $x$, then it is valid to (3) place a register at $x$, computing it, or (4) move a stone to $x$ from one of the children of $x$, computing $x$. The register allocation problem reduces to the following no-overhead rematerialization decision problem (RP-DEC):
\begin{definition}
(RP-DEC): Given result-terminated data-flow DAG $G = (V, E)$ corresponding to a program, with unit cost to compute each node and unit memory for the results of each node, does there exist an execution plan that evaluates the leaf (terminal) node $t \in V$ with maximum memory usage $b$ at cost at most $|V|$?
\end{definition}
RP-DEC is decidable by solving the memory-constrained form of Problem 1 with sufficient stages, then checking if the returned execution plan has cost at most $|V|$.
RP-DEC closely resembles Sethi's decision problem, differing only in subtleties. The register allocation DAG is rooted at the desired result $t$ whereas a data-flow graph terminates at the result. Second, register-based computations can be in place, \eg{} a summation $a + b$ may be written to the same location as either of the operands. In neural network computation graphs, we cannot perform all computations in place, so we did not make this assumption. To reduce Sethi's decision problem to RP-DEC, given result-rooted DAG $G$, construct result-terminated $G'$ by reversing all edges. Then, if Sethi's instance allows for at most $k$ registers, allow for a memory budget of $b = k + 1$ bytes: one byte to temporarily write outputs of operations that would have been written in place.

Despite hardness of register allocation, \citet{goodwin_optimal_1996} observe that a 0-1 integer program for optimal allocation under an instruction schedule has empirical complexity $O(n^{2.5})$, polynomial in the number of constraints. Similarly, 
Section~\ref{sec:eval} shows that the frontier-advancing, constrained optimization problem (\ref{eq:objective2}) is tractable for many networks.

\section{Comparison of approximations}

In Section~\ref{sec:approx}, we discussed an approximation strategy based on rounding the LP relaxation, evaluated with deterministic rounding in Section~\ref{sec:eval:approx}. Figure~\ref{fig:approx_rounding} compares schedules produced by our proposed two-phase rounding strategy when the $S^*$ matrix from the LP relaxation is rounded with a randomized and a deterministic approach. While two-phase randomized rounding of $S^*$ offers a range of feasible solutions, two-phase deterministic rounding produces consistently lower cost schedules. 
While appropriate for VGG16, for MobileNet, our budget allowance $\epsilon=0.1$ is overly conservative as schedules use less memory than the 16 GB budget. A search procedure over $\epsilon \in [0, 1]$ could be used to produce more efficient schedules.

\begin{figure}[t]
    \centering
    \includegraphics[width=\columnwidth]{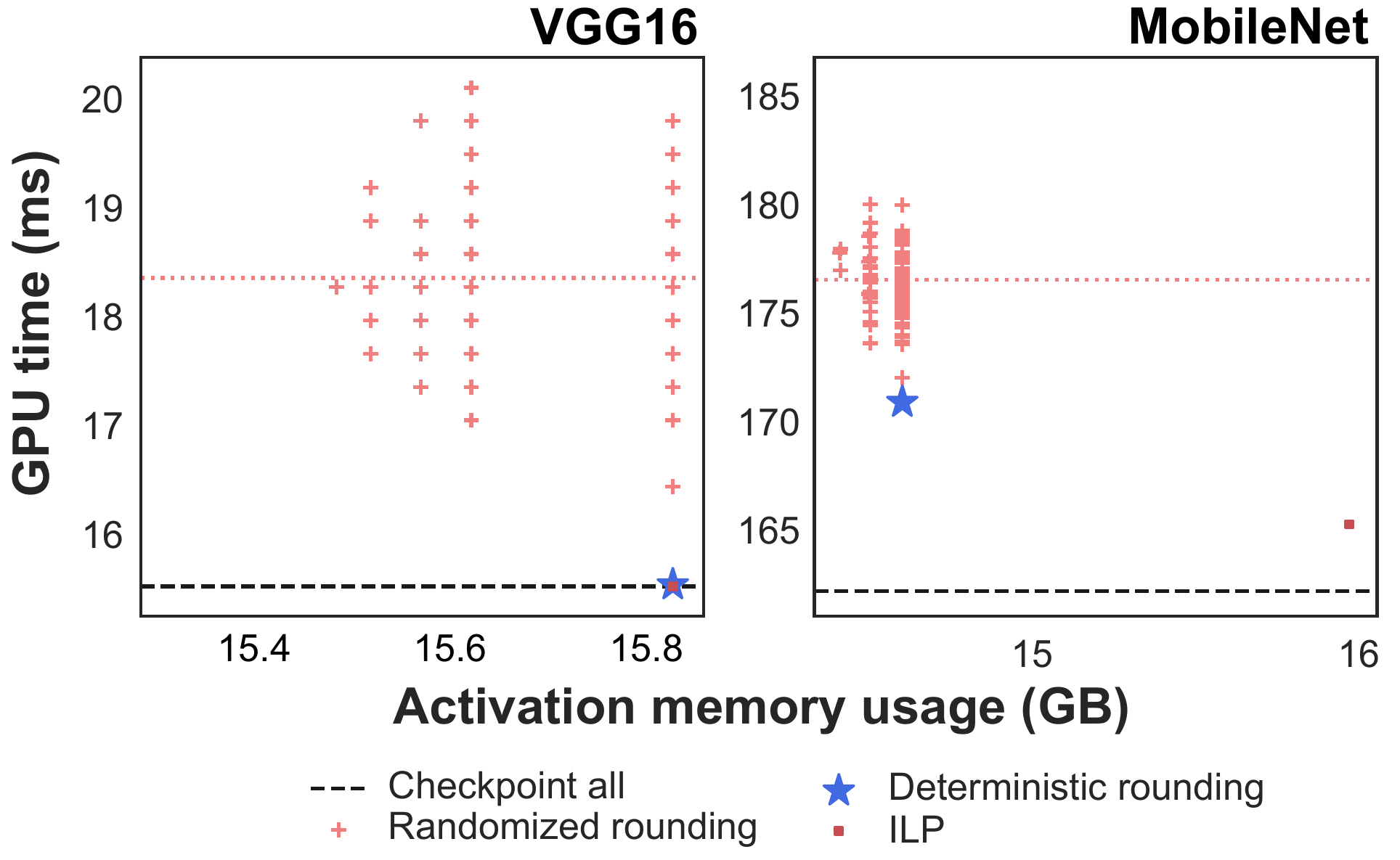}
    \vspace{-1em}
    \caption{Comparison of the two-phase LP rounding approximation with randomized rounding of $S^*$ and deterministic rounding of $S^*$ on different models. We compare memory usage and computational cost (objective), in milliseconds according to profile-based cost model. The average of the randomized rounding costs is shown as a dotted line.}
    \label{fig:approx_rounding}
\end{figure}

\end{document}